\newtheorem{theorem}{Theorem}
\newtheorem{lemma}{Lemma}
\newtheorem{corollary}[theorem]{Corollary}
\newtheorem{definition}[theorem]{Definition}
\newtheorem{claim}{Claim}
\newcommand{\pr}[1]{\mathbb{P}\left\{{#1}\right\}}
\newcommand{\E}{\mathbb{E}}
\newfont{\msym}{msbm10}
\newcommand{\paren}[1]{\left({#1}\right)}
\newcommand{\brackets}[1]{\left[{#1}\right]}
\newcommand{\braces}[1]{\left\{{#1}\right\}}
\newcommand{\abs}[1]{\left\vert{#1}\right\vert}
\newcommand{\beq}[1]{\begin{equation}\label{#1}}
\newcommand{\eeq}{\end{equation}}
\newcommand{\beqa}{\begin{eqnarray}}
\newcommand{\eeqa}{\end{eqnarray}}
\newcommand{\thmref}[1]{Theorem~\ref{#1}}
\newcommand{\lemref}[1]{Lemma~\ref{#1}}
\newcommand{\corref}[1]{Corollary~\ref{#1}}
\newcommand{\mb}[1]{{\boldsymbol{#1}}}
\newcommand{\vx}{\mb{x}}
\newcommand{\vomega}{\mb{\omega}}
\newcommand{\vw}{\vomega}
\newcommand{\valpha}{\mb \alpha}
\newcommand{\newstufffroma}[1]{}
\newcommand{\newstufffrom}[1]{}
\newcommand{\oldnote}[2]{}
\newcommand{\comment}[1]{}
\newcommand{\commentout}[1]{}
\newcounter {mySubCounter}
\newcommand {\twocoleqn}[4]{
  \setcounter {mySubCounter}{0} %
  \let\OldTheEquation \theequation %
  \renewcommand {\theequation }{\OldTheEquation \alph {mySubCounter}}%
  \noindent \hfill%
  \begin{minipage}{.40\textwidth}
\vspace{-0.6cm}
    \begin{equation}\refstepcounter{mySubCounter}
      #1
    \end {equation}
  \end {minipage}
~~~~~~
  \addtocounter {equation}{ -1}%
  \begin{minipage}{.40\textwidth}
\vspace{-0.6cm}
    \begin{equation}\refstepcounter{mySubCounter}
      #3
    \end{equation}
  \end{minipage}%
  \let\theequation\OldTheEquation
}
\ificcvfinal\pagestyle{empty}\fi
\begin{document}

\title{ASAP: Architecture Search, Anneal and Prune}
\author[1]{\small Asaf Noy}
\author[1]{\small Niv Nayman}
\author[1]{\small Tal Ridnik}
\author[1]{\small Nadav Zamir}
\author[2]{\small Sivan Doveh}
\author[1]{\small \\ Itamar Friedman}
\author[2]{\small Raja Giryes}
\author[1]{\small Lihi Zelnik-Manor}

\affil[1]{\footnotesize DAMO Academy, Alibaba Group (first.last@alibaba-inc.com)}
\affil[2]{\footnotesize School of Electrical Engineering, Tel-Aviv University Tel-Aviv, Israel}

\maketitle

\begin{abstract}

Automatic methods for Neural Architecture Search (NAS) have been shown to produce state-of-the-art network models. Yet, their main drawback is the computational complexity of the search process.
As some primal methods optimized over a discrete search space,  thousands of days of GPU were required for convergence.
A recent approach is based on constructing a differentiable search space that enables gradient-based optimization, which reduces the search time to a few days.
While successful, it still includes some noncontinuous steps, e.g., the pruning of many weak connections at once.
In this paper, we propose a differentiable search space that allows the annealing of architecture weights, while gradually pruning inferior operations. In this way, the search converges to a single output network in a continuous manner.
Experiments on several vision datasets demonstrate the effectiveness of our method with respect to the search cost and accuracy of the achieved model. Specifically, with $0.2$ GPU search days we achieve an error rate of $1.68\%$ on CIFAR-10. 
\end{abstract}
\vspace{-0.18cm}
\section{Introduction}
\label{sec:intro}
Over the last few years, deep neural networks highly succeed in computer vision tasks, mainly because of their automatic feature engineering. This success has led to large human efforts invested in finding good network architectures.
A recent alternative approach is to replace the manual design with an automated Network Architecture Search (NAS)~\cite{elsken2018neural}.
NAS methods have succeeded in finding more complex architectures that pushed forward the state-of-the-art in both image and sequential data tasks.

One of the main drawbacks of some NAS techniques is their large computational cost. 
For example, the search for NASNet~\cite{zophNasRL} and AmoebaNet~\cite{Real18Regularized}, which achieve state-of-the-art results in classification~\cite{huang2018gpipe}, 
have required $1800$ and $3150$ GPU days respectively. On a single GPU, this corresponds to years of training time.
More recent search methods, such as ENAS~\cite{ENAS} and DARTS~\cite{liu19darts}, reduced the search time to a few GPU days, while not compromising much on accuracy. 
While this is a major advancement, there is still a need to speed up the process to make the automatic search affordable and applicable to more problems. 

In this paper we propose an approach that further reduces the search time to a few hours, rather than days or weeks.
The key idea that enables this speed-up is relaxing the discrete search space to be continuous, differentiable, and annealable.
For continuity and differentiability, we follow the approach in DARTS~\cite{liu19darts}, which shows that a continuous and differentiable search space allows for gradient-based optimization, resulting in orders of magnitude fewer computations in comparison with black-box search, e.g., of~\cite{zophNasRL,Real18Regularized}.
We adopt this line of thinking, but in addition we construct the search space to be annealable, i.e., emphasizing strong connections during the search in a continuous manner.
We back this selection theoretically and demonstrate that an annealable search space implies a more continuous optimization, and hence both faster convergence and higher accuracy.

The annealable search space we define is a key factor to both reducing the network search time and obtaining high accuracy.
It allows gradual pruning of weak weights, which reduces the number of computed connections throughout the search, thus, gaining the computational speed-up. 
In addition, it allows the network weights to adjust to the expected final architecture, and choose the components that are most valuable, thus, improving the classification accuracy of the generated architecture. 
 
We validate the search efficiency and the performance of the generated architecture via experiments on multiple datasets.
The experiments show that networks built by our algorithm achieve either higher or on par accuracy to those designed by other NAS methods, even-though our search requires only a few hours of GPU rather than days or weeks. See for example results on CIFAR-10 in Fig.~\ref{fig:cifar_acc_days}. Specifically, searching for $4.8$ GPU-hours on CIFAR-$10$ produces a model that achieves a mean test error of $1.68\%$.

\begin{figure}[]    
          \centering
        \includegraphics[width=\linewidth]{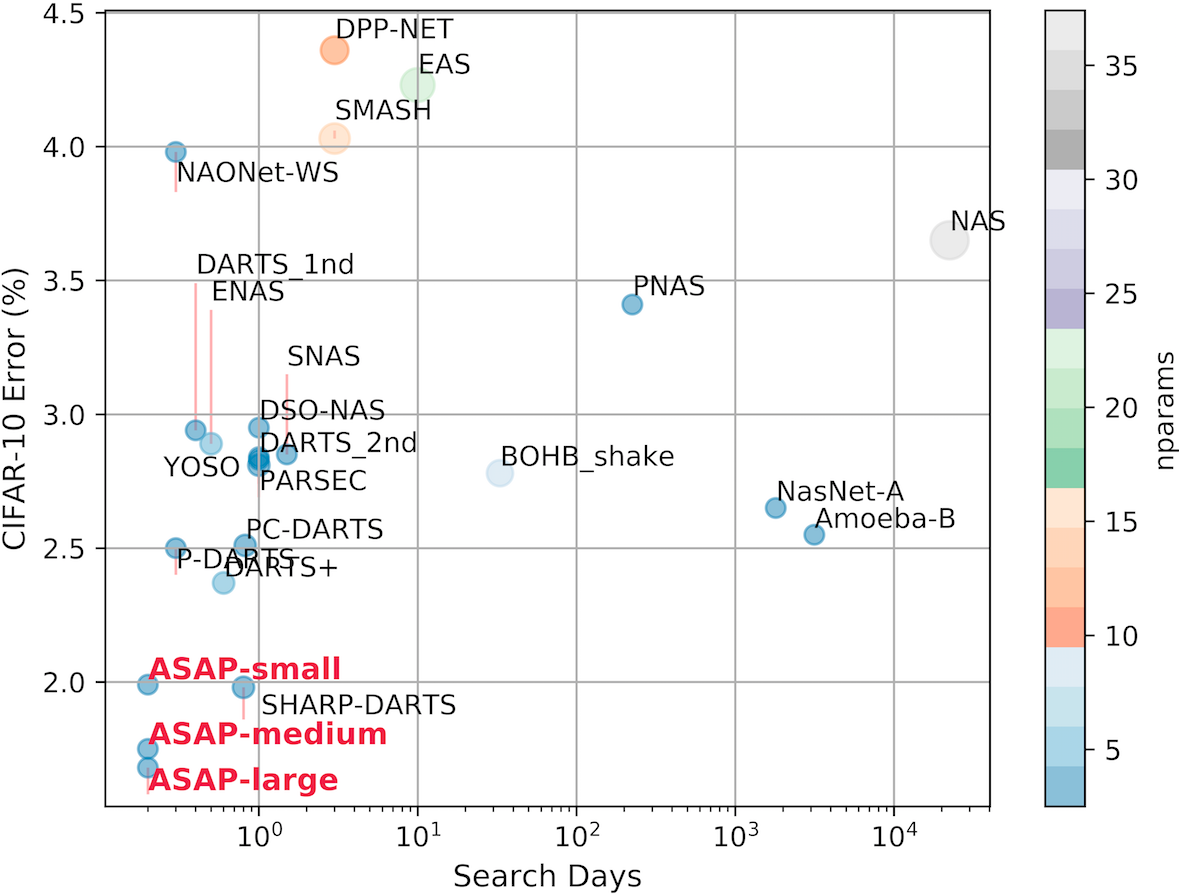}
        \caption{Comparison of CIFAR-10 error vs search days of top NAS methods. size and color are correlated with the memory footprint.}
        \label{fig:cifar_acc_days}
\end{figure}

\section{Related Work}
\label{sec:related_work}

{\bf Architecture search techniques.} 
A reinforcement learning based approach has been proposed by Zoph et al. for neural architecture search \cite{zophNasRL}. They use a recurrent network as a controller to generate the model description of a child neural network designated for a given task. The resulted architecture (NASnet) improved over the existing hand-crafted network models at its time. An alternative search technique has been proposed by Real et al. \cite{Real17Large, Real18Regularized}, where an evolutionary (genetic) algorithm has been used to find a neural architecture tailored for a given task. The evolved neural network \cite{Real18Regularized} (AmoebaNet), improved further the performance over NASnet.
Although these works achieved state-of-the-art results on various classification tasks, their main disadvantage is the amount of computational resources they demanded.

To overcome this problem, new efficient architecture search methods have been developed, showing that it is possible, with a minor loss of accuracy, to find neural architectures in just few days using only a single GPU. 
Two notable approaches in this direction are the differentiable architecture search (DARTS) \cite{liu19darts} and the efficient NAS (ENAS)~\cite{ENAS}. ENAS is similar to NASnet~\cite{zophNasRL}, yet its child models share weights of their shared operations from the large computational graph. During the search phase child models performance are estimated as a guiding signal to find more promising children. The weight sharing significantly reduces the search time since the child models performance are estimated with a minimal fine-tune training.

In DARTS, the entire computational graph, which includes repeating computation cells, is learned all together.
At the end of the search phase, a pruning is applied on connections and their associated operations within the cells, keeping those having the highest magnitude of their related architecture weights multipliers (which represents the connections' strength). These architecture weights are learned in a continuous way during the search phase. 
While DARTS achieves good accuracy, our hypothesis is that the harsh pruning which occurs only once at the end of the search phase is sub-optimal and that a gradual pruning of connections can improve both the search efficiency and accuracy.

SNAS \cite{snas} suggested an alternative method for selecting the connections in the cell. SNAS approaches the search problem as sampling from a distribution of architectures, where the distribution itself is learned in a continuous way. The distribution is expressed via slack softened one-hot variables that multiply the operations and imply their associated connection. SNAS introduces a temperature parameter that steadily decrease close to zero through time, forcing the distribution towards being degenerated, thus making slack variables closer to binary, which implies which connections should be sampled and eventually chosen.
Similarly to SNAS we suggest to perform an annealing process for connections and operations selection, but differently from SNAS we wish to learn the entire computational graph all together, while gradually annealing and pruning connections. 

In YOSO \cite{DSO}, they apply sparsity regularization over architecture weights and in the final stage they remove useless connections and isolated operations.

While both of these works \cite{snas,DSO} propose an alternative to the DARTS connections selection rule, none of them managed to surpass its accuracy on CIFAR-10. In our solution we improve both the accuracy and efficiency of the DARTS searching mechanism.

\vspace{7pt}
{\bf Neural networks pruning strategies.} 
Many pruning techniques exist for neural networks, since the networks may contain meaningful amount of operations which provides low gain \cite{Han2015Learning}. 
  
The methods for weight pruning can be divided into two main approaches: pruning a network post-training and performing the pruning during the training phase.

In {\em post-training pruning} the weight elimination is performed only after the network is trained to learn the importance of the connections in it \cite{LeCun90obg,Hassibi93obs,Han2015Learning}.
One selection criterion removes the weights based on their magnitude.
The main disadvantage of post-training methods is twofold: (i) long training time, which requires a train-prune-retrain schedule; (ii) pruning of weights in a single shot might lose some dependence between some of them that become more apparent if they are removed gradually.

{\em Pruning-during-training} as demonstrated in \cite{toPrune,comp} helps to reduce the overall training time and get a sparser network.
In \cite{toPrune}, a method for gradually pruning the network has been proposed, where the number of pruned weights at each iteration depends on the final desired sparsity and a given pruning frequency. Another work suggested a compression aware method that encourages the weight matrices to be of low-rank, which helps in  pruning them in a second stage \cite{comp}.
Another approach sets the coefficients that are smaller than a certain threshold to zero every K steps  \cite{TG}. Another pruning method uses simulated annealing\cite{SA97}. 


\section{Method}
\label{sec:method}

Our goal is to design an efficient algorithm for architecture search. 
We start by defining a differentiable search space that allows gradient-based optimization. A key characteristic of the search space we define is allowing for annealing of architecture weights. 

Annealing of the architecture weights and pruning weak ones is a key for converging to a single architecture. However, too fast annealing schedule or too strict pruning policy could end with convergence to inferior architectures.  \\
We provide a theory for selecting the critical combination of annealing schedule and pruning policy,
which guarantees that the pruning will not affect the quality of the final cell, as only inferior operations will be pruned along the search. Selecting a combination according to the theory will provide the advantages of annealing and pruning, converging to a better architecture, faster.

This theory provides us with some insights as to the importance of using an annealing schedule. Yet, as it requires a relatively slow schedule for the guarantees in it, we suggest afterwards another gradual pruning schedule that empirically leads to faster convergence in the network search.




\subsection{ASAP: Architecture Search, Anneal and Prune}

Our approach can be viewed as a generalization of DARTS~\cite{liu19darts} into an annealable search space.
The key idea behind DARTS is the definition of a continuous search space that facilitates the definition of a differentiable training objective.
DARTS continuous relaxation scheme leads to an efficient optimization and  a fast convergence in comparison to previous solutions. 

The architecture search in DARTS focus on finding a repeating structure in the network, which is called cell. They follow the observation that modern neural networks consist of one or few computational blocks (e.g. res-block) that are stacked together to form the final structure of the network. While a network might have hundreds of layers, the structure within each of them is repetitive. Thus, it is enough to learn one or two structures, which is denoted as cell, to design the whole network.

A cell is represented as a directed acyclic graph, consisting of an ordered sequence of nodes. Every node $\vx^{(i)}$ is a feature map and each directed connection $(i, j)$ is associated with some operation $o^{(i,j)}\in\mathcal{O}$ that transforms $\vx^{(j)}$ and connects it to $\vx^{(i)}$. Intermediate nodes are computed based on their predecessors,
\begin{equation} \label{eq:inter_nodes}
  \vx^{(i)} =\sum_{j<i}o^{(i,j)}\left(\vx^{(j)}\right)
\end{equation}

In particular, in DARTS, they search for two types of cells: normal and reduction. In normal cells the operations preserve the spatial dimensions, while in reduction cells the operations adjacent to the input nodes are of stride two, so the spatial dimensions at the output are halved.

The goal of the search phase is to select the operations $o^{(i,j)}$ which yield an overall architecture with the best performance. There are seven candidate operations: $3$x$3$ and $5$x$5$ separable and dilated separable
convolutions, $3$x$3$ max-pooling, $3$x$3$ average-pooling and
an identity.


This scheme does not encourage convergence towards a reasonably sized architecture, and produces an over-parametrized network. 
Therefore, hard pruning is applied over the network connections in order to derive the final \emph{child network}.
As shown in~\cite{snas}, this introduces a \emph{relaxation bias} that could result in a significant drop in accuracy between the un-pruned network and the final child network.



To overcome this limitation we define a search space that allows gradual pruning via annealing.
Our search converges gradually to the final child network without requiring a hard thresholding at the end of the search.

We construct an architecture by stacking normal and reduction cells, similarly to DARTS and ENAS. 
Also here, we connect nodes using the a mixed operation $\bar{o}^{(i,j)}$ edge. Yet, now we allow architecture weights annealing in it via a temperature parameter $T$:
\begin{eqnarray}\label{eq:mixedOpTemp}
\bar{o}^{(i,j)}(\vx;T) &=&\sum_{o\in\mathcal{O}}\Phi_o(\valpha^{(i,j)};T) \cdot o(\vx)
\end{eqnarray}
where $\valpha_{o}^{(i,j)}$ is the architecture weight associated with operation $o\in\mathcal{O}$ at edge $(i,j)$, and $\Phi_o$ forms a probability distribution.

The function $\Phi_o$ should be designed such that it guides the optimization to select a single operation out of the mixture in a finite time. Initially $\Phi_o$ should be a uniform distribution, allowing consideration of all the operations.
As the iterations continue the temperature is reduced and $\Phi_o$ should converge into a degenerated distribution that selects a single operation. 
Mathematically, this implies that $\Phi_o$ should be uniform for $T\rightarrow \infty$, and sparse for $T\rightarrow 0$.
Thus, we select the following probability distribution,
\begin{eqnarray}\label{eq:mixedOpTemp}
\Phi_o(\valpha^{(i,j)};T)=\frac{\exp{\braces{\frac{\valpha_o^{(i,j)}}{T}}} }{\sum_{o'\in\mathcal{O}}\exp{\braces{\frac{\valpha_{o'}^{(i,j)}}{T}}}}
\end{eqnarray}
This definition is closely related to the Gibbs-Boltzmann distribution, where the weights $\valpha_o^{(i,j)}$ correspond to negative energies, and operations $o^{(i,j)}$ to mixed system states~\cite{van1987simulated}. 

The architecture weights are updated via gradient descent,
\vspace{-0.32cm}
\begin{eqnarray} \label{GD_SM_update}
\alpha_{k}&\leftarrow& \alpha_{k}-\eta\nabla_{\alpha_{k}}\mathcal{L}_{val} \notag \\
\nabla_{\alpha_{k}}\mathcal{L}_{val}&=&\Phi_{o_k}(\valpha;T)\brackets{ \nabla_{\bar{o}}\mathcal{L}_{val}\cdot \left(o_{k}-\bar{o}\right)}
\end{eqnarray}
Note, that DARTS forms a special case of our approach when setting a fixed $T=1$. A key challenge here is how to select $T$. Hereafter in Section~\ref{sec:schedule}, we provide a theory for selecting it followed by a heuristic annealing schedule strategy to further speed up the convergence of our search. In the experiments we show its effectiveness.

\begin{figure}[]    
          \centering
        \includegraphics[width=\linewidth]{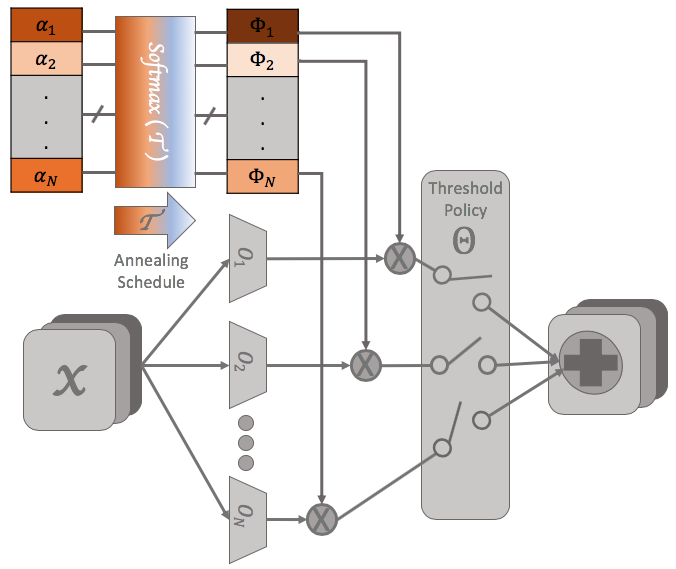}
        \caption{An illustration of our gradual annealing and pruning algorithm for architecture search depicted for a cell in the architecture. An annealing schedule is applied over architecture weights (top-left), which facilitates fast and continuous pruning of connections (right).}
        \label{fig:darts_annealing_schedule}
\end{figure}


We summarize our algorithm for Architecture Search, Anneal, and Prune (ASAP), outlined in Algorithm~\ref{alg:ASAP} and visualized in Figure~\ref{fig:darts_annealing_schedule}.
ASAP anneals and prunes the connections within the cell in a continuous manner, leading to a short search time and an improved cell structure. 
It starts with a few ``warm-up'' iterations for reaching well-balanced weights. Then it optimizes over the network weights while pruning weak connections, as described next. 

At initialization, network weights are randomly set. This could result in a discrepancy between parameterized operations (e.g. convolution layers) and non-parameterized operations (e.g. pooling layers). Applying pruning at this stage with imbalanced weights could lead to wrong premature decisions.
Therefore, we start with performing a number of gradient-descent based optimization steps over the network-weights only, without pruning, i.e. \emph{grace-cycles}. We denote the number of these cycles by $\tau$.

Once this warm-up ends, we perform a gradient-descent based optimization for the architecture-weights, in an alternating manner (with the updates of $\valpha$), after every network-weights optimization. 
Throughout this iterative process, the temperature $T$ decays according to a predefined annealing schedule. 
Operations are pruned if their corresponding architecture-weights go below a pruning threshold $\theta$, which is updated along the iterations.
The process ends once we meet a stopping criterion.
In our current implementation we used a simple criterion that stops when we reach a single operation per mixed operation.
Other criteria could be easily adopted as well.

\subsection{Annealing schedule and thresholding policy} \label{sec:schedule}

We now turn to describe the annealing schedule that determines the temperature $T$ through time and the threshold policy governing the updates of $\theta$.  
This choice is dominated by a trade-off. 
On the one hand, pruning-during-training simplifies the optimization objective and reduces the complexity, assisting the network to converge and reducing overfitting. 
In addition, it accelerates the search process, since pruned operations are removed from the optimization. 
This encourages the selection of fast temperature decay and a harsh thresholding. 
On the other hand, premature pruning of operations during the search could lead to a sub-optimal child network. 
This suggests we should choose a slow temperature decay and a soft thresholding.

To choose schedule and policy that balance this trade-off, we suggest Theorem~\ref{theorm:pac} that provides a functional form for $\paren{T_t,\theta_t}$. It views the pruning performed as a selection problem, i.e., pruning all the inferior operations, such that the ones with the highest expected architecture weight remains, in a setup where only the empirical value is measured. 
The theorem guarantees under some assumptions 
with a high probability, that Algorithm~\ref{alg:ASAP} prunes inferior operations out of the mixed-operation along the run (lines \ref{pruning_rule}-\ref{alg:prune}) and outputs the best operation (line \ref{alg:return})).
More formally, Algorithm~\ref{alg:ASAP} is a $\paren{0, \delta}$-PAC: 
\begin{definition}[($0, \delta$)-PAC]\label{def:pac}
    An algorithm that outputs the best operation with probability $1-\delta$ is ($0, \delta$)-PAC.
\end{definition} 

\begin{theorem} \label{theorm:pac}
    Assuming $\nabla_{\alpha_{t,i}}\mathcal{L}_{val}(\vw, \valpha; T_t)$ is independent through time and bounded by $\mathbf{L}$ in absolute value for all $t$ and $i$, then Algorithm \ref{alg:ASAP} with a threshold policy,
    \begin{align}
      \theta_t &= \nu_t e^{-t} 
      \\
      \nu_t&\in\Upsilon = \left\{\nu_t\mid \nu_t \geq 0; \lim_{t\rightarrow{}\infty}\frac{\log{(\nu_t)}}{t}=0\right\}
    \end{align}
    and an annealing schedule,
    \begin{align}
        T(t)&= \eta\mathbf{L} \rho_t \sqrt{\frac{8}{t} \log\paren{\frac{\pi^2 N t^2}{3\delta}}}
        \\
        \rho_t &= \frac{t}{t+\log\paren{\frac{1}{N\nu_t}}}
    \end{align} 
    is ($0,\delta$)-PAC.
\end{theorem}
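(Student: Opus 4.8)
The plan is to read the pruning process as a best-arm-identification problem and to show that the stated pair $\paren{T_t,\theta_t}$ turns Algorithm~\ref{alg:ASAP} into a sequential-elimination scheme which, on a single high-probability event, discards only suboptimal operations. First I would make the random-walk structure explicit: since the update gives $\alpha_{t,i}=\alpha_{0,i}-\eta\sum_{s\le t}\nabla_{\alpha_{s,i}}\mathcal{L}_{val}$ and, by assumption, the increments $\nabla_{\alpha_{s,i}}\mathcal{L}_{val}$ are independent across $s$ and bounded by $\mathbf{L}$, each architecture weight is a sum of independent, $\eta\mathbf{L}$-bounded terms. I would define the mean trajectory $\bar\alpha_{t,i}=\Exp{\alpha_{t,i}}$ and call the operation $i^{*}$ with the largest $\bar\alpha_{t,i}$ the best operation; the target is to guarantee that $i^{*}$ survives every pruning step while every other arm is eventually pruned.

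Second, I would establish the concentration that drives the schedule. Applying Hoeffding's inequality to $\alpha_{t,i}-\bar\alpha_{t,i}$ with the per-arm, per-step confidence level $\delta_t=\frac{3\delta}{\pi^2 N t^2}$ yields a confidence radius proportional to $\eta\mathbf{L}\sqrt{\tfrac{8}{t}\log\paren{\tfrac{\pi^2 N t^2}{3\delta}}}$, which is exactly the factor appearing in $T(t)$, the $\rho_t$ prefactor aside. The point of the $t^{-2}$ weighting is that a union bound over the $N$ operations and over all iterations costs $N\sum_{t\ge1}\delta_t=\frac{3\delta}{\pi^2}\sum_{t\ge1}t^{-2}=\frac{\delta}{2}$, so the two failure directions together fit inside the budget $\delta$. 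On the complementary good event, for every $t$ and $i$ the empirical weight $\alpha_{t,i}$ lies within this radius of $\bar\alpha_{t,i}$.

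Third, I would translate this into a statement about the softmax $\Phi$ and the threshold. Since $\Phi_i/\Phi_{i^{*}}=\exp\!\paren{(\alpha_{t,i}-\alpha_{t,i^{*}})/T_t}$, setting $T_t$ equal to the confidence radius measures the weight gap in units of statistical resolution: once the interval of an inferior arm has separated from that of $i^{*}$, the gap exceeds the radius and $\Phi_i$ is exponentially suppressed. I would then check the two matching inequalities. For pruning of an inferior arm I would bound $\Phi_i\le\exp\!\paren{-(\alpha_{t,i^{*}}-\alpha_{t,i})/T_t}$ and show that $\rho_t$ is precisely the correction making this bound drop below $\theta_t=\nu_t e^{-t}$ once the arm is confirmed suboptimal; indeed $t/\rho_t=t+\log\paren{1/(N\nu_t)}$, so the exponent $t$ of $e^{-t}$, the softmax normalization by $N$, and the factor $\nu_t$ are exactly absorbed. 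For survival of $i^{*}$ I would show that on the good event $\Phi_{i^{*}}$ stays bounded below by a fixed multiple of $1/N$ (its empirical weight is within the radius of the true maximum), and combine this with the constraint $\nu_t\in\Upsilon$, i.e.\ $\lim_t \log(\nu_t)/t=0$, which forces $\rho_t\to1$ and drives $\theta_t=\nu_t e^{-t}$ to $0$, so $\theta_t$ eventually lies below $\Phi_{i^{*}}$ and $i^{*}$ is never pruned.

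The main obstacle is this last calibration: proving that the three quantities $T_t$, $\theta_t$ and $\rho_t$ interlock so that the event $\Phi_i<\theta_t$ coincides, with high probability, with genuine separation of arm $i$ from $i^{*}$, neither firing prematurely on $i^{*}$ nor failing to fire on a suboptimal arm, and that this holds uniformly in $t$ including the first iterations after the grace cycles. Once both directions hold on the good event, the union bound gives that with probability at least $1-\delta$ Algorithm~\ref{alg:ASAP} prunes only inferior operations and returns $i^{*}$, which is exactly the $\paren{0,\delta}$-PAC guarantee of Definition~\ref{def:pac}.
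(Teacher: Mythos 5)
Your proposal follows the same route as the paper's own proof: read the threshold test as successive elimination, apply Hoeffding's inequality to the bounded independent increments $g_{t,i}=-\eta\nabla_{\alpha_{t,i}}\mathcal{L}_{val}$ with per-arm, per-time confidence levels $3\delta/(\pi^2 N t^2)$, union bound over arms, times, and both deviation directions so the total budget is exactly $\delta$, and check that the calibration of $\paren{T_t,\theta_t,\rho_t}$ makes the test fire only on genuinely separated arms. Your confidence radius $\eta\mathbf{L}\sqrt{\tfrac{8}{t}\log\paren{\tfrac{\pi^2 Nt^2}{3\delta}}}$ and the identity $t/\rho_t=t+\log\paren{1/(N\nu_t)}$ are precisely the content of the paper's Corollary~\ref{cor:bound_for_t}, Theorem~\ref{theorm:pac_per_op}, and Claim~\ref{claim:th_as_se}.

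The genuine gap is in your survival argument for the best arm $i^*$. You assert that on the good event $\Phi_{i^*}$ stays bounded below by a fixed multiple of $1/N$, and then conclude that since $\theta_t=\nu_te^{-t}\to 0$ the threshold ``eventually'' lies below $\Phi_{i^*}$. Both steps fail. The good event only controls time-averaged deviations, $\abs{\alpha_{t,i}-\bar{\alpha}_{t,i}}/t\le\beta_t$, so the best bound on the empirical gap is $\alpha_t^*-\alpha_{t,i^*}\le 2t\beta_t = tT_t/\rho_t$; measured in units of $T_t$ this is $t/\rho_t\approx t\to\infty$, hence the honest lower bound is $\Phi_{i^*}\ge\frac{1}{N}e^{-t/\rho_t}$, which decays like $e^{-t}$ --- there is no fixed-$1/N$ floor. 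Moreover, an ``eventually'' comparison cannot yield the PAC property: a single pruning of $i^*$ at any early iteration is fatal, so $\Phi_{i^*}<\theta_t$ must be excluded at every $t$, not just for large $t$. What actually saves the theorem is that the two decaying quantities match exactly: $\frac{1}{N}e^{-t/\rho_t}=\frac{1}{N}\cdot N\nu_te^{-t}=\theta_t$, so on the good event $\Phi_{i^*}\ge\theta_t$ holds for all $t$ simultaneously. Equivalently, and this is the paper's route via the log-sum-exp bound $\log\sum_je^{x_j}\le\max_jx_j+\log N$: the pruning event $\Phi_{o_i}<\theta_t$ forces $\frac{\alpha_{t,i}}{t}+\beta_t<\frac{\alpha_t^*}{t}-\beta_t$, and on the good event $i^*$ can never satisfy this, since $\frac{\alpha_{t,i^*}}{t}+\beta_t\ge\frac{\bar{\alpha}_{t,i^*}}{t}\ge\frac{\bar{\alpha}_{t,j}}{t}\ge\frac{\alpha_{t,j}}{t}-\beta_t$ for every $j$. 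You correctly flag this interlocking as the main obstacle, but the mechanism you sketch for it (a constant floor plus an asymptotic comparison) would not close it; the proof needs the exact cancellation above, valid uniformly in $t$.
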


In other words, the theorem shows that with a proper annealing and thresholding the algorithm has a high likelihood to prune only sub-optimal operations. A proof sketch appears below in Section~\ref{sec:theoretical_analysis}. The full proof is deferred to the supplementary material. 

While these theoretical results are strong, the critical schedule they suggest is rather slow, because PAC bounds tend to be overly pessimistic in practice. 
Therefore, for practical usage we suggest a more aggressive schedule.
It is based on observations explored in Simulated Annealing, which shares a similar tradeoff between the continuity of the optimization process and the time required for deriving the solution~\cite{busetti2003simulated}.

Among the many alternatives we choose the exponential schedule:
\begin{align}\label{eq:temp_schedule}
{T}(t)=T_0\beta^t,     
\end{align}
which has been shown to be effective when the optimization steps are expensive, e.g. \cite{ingber1989very},\cite{nourani1998comparison},\cite{kirkpatrick1983optimization}.
This schedule starts with a relatively high temperature $T_0$ and decays fast. 
As for the pruning threshold, we chose the simplest solution of a fixed value $\Theta\equiv\theta_0$. 
We demonstrate the effectiveness of our choices via experiments in Section~\ref{sec:exp}.

\begin{algorithm}[H] 
\caption{ASAP for a single Mixed Operation} 
\label{alg:ASAP}
\begin{algorithmic}[1]
\STATE \textbf{Input}: Operations $o_i\in\mathcal{O}~~i\in \{1,..,N\}$, 
\\  ~~~~~~~~~~~  Annealing schedule $T_t$,  \\    ~~~~~~~~~~~  Grace-temperature $\tau$,  \\ ~~~~~~~~~~~  Threshold policy $\theta_t$, 
\STATE \textbf{Init}: $\valpha_i \leftarrow 0,~~i\in \{1,..,N\}$.
\label{init}
\WHILE{ $|\mathcal{O}|>1$}
\STATE Update $\vw$ by descent step over $\nabla_{\vw}\mathcal{L}_{\mathrm{train}}(\vw,\valpha;T_t)$
\IF{$T_t<\tau$}
\STATE Update $\valpha$ by descent step over $\nabla_\valpha\mathcal{L}_{\mathrm{val}}(\vw,\valpha;T_t)$
\FOR{\textbf{each} $o_i \in \mathcal{O}$ such that ~$\Phi_{o_i}\paren{\valpha;T_t}<\theta_t$ \label{pruning_rule}}
\STATE $\mathcal{O}=\mathcal{O} \setminus \braces{o_i}$ \label{alg:prune}
\ENDFOR
\ENDIF 
\STATE Update $T_t$ 
\STATE Update $\theta_t$ 
\ENDWHILE
\RETURN {$\mathcal{O}$} \label{alg:return}
\end{algorithmic}
\end{algorithm}

\subsection{Theoretical Analysis}
\label{sec:theoretical_analysis}

The proof of the theorem relies on two main steps: (i) Reducing the algorithm to a \emph{Successive-Elimination} method \cite{even2006action}, and (ii) Bounding the probability of deviation of $\valpha$ from its expected value. We sketch the proof using Claim~\ref{claim:th_as_se} and Theorem~\ref{theorm:pac_per_op} below. 

\begin{claim}\label{claim:th_as_se}
    The pruning rule (Step \ref{pruning_rule}) in Algorithm~\ref{alg:ASAP} is equivalent to pruning $o_i\in\mathcal{O}$ at time t if,
    \begin{align}
        \frac{\alpha_{t,i}}{t} + \beta_t &<  \frac{\alpha_t^*}{t} -\beta_t,
    \end{align}
    where $\alpha_t^* = \max_i\left\{\alpha_{t,i}\right\}$ and $\beta_t = \frac{T_t}{2\rho_t}$. 
\end{claim}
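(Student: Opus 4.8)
The plan is to unfold the pruning rule of Step~\ref{pruning_rule} into an explicit inequality on the architecture weights and then reorganize it into the stated confidence-interval form. First I would write the rule $\Phi_{o_i}(\valpha; T_t) < \theta_t$ using the softmax definition, so that pruning occurs precisely when
$$\frac{\exp\braces{\alpha_{t,i}/T_t}}{\sum_{o' \in \mathcal{O}} \exp\braces{\alpha_{t,o'}/T_t}} < \theta_t.$$
Taking logarithms (both sides positive) and denoting the partition function $Z_t = \sum_{o'} \exp\braces{\alpha_{t,o'}/T_t}$, this is equivalent to $\alpha_{t,i}/T_t - \log Z_t < \log \theta_t$. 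Substituting the threshold policy $\theta_t = \nu_t e^{-t}$ gives $\log\theta_t = \log\nu_t - t$, so the rule becomes $\alpha_{t,i}/T_t < \log\nu_t - t + \log Z_t$.

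The main step, and the place where the structure of the algorithm enters, is to control $\log Z_t$. Since the maximal summand dominates, $Z_t \le N \exp\braces{\alpha_t^*/T_t}$, hence $\log Z_t \le \alpha_t^*/T_t + \log N$, with equality in the worst (fully symmetric) case. I would replace $\log Z_t$ by this bound, which is exactly what renders the pruning rule linear in the weights and therefore amenable to the Successive-Elimination reduction used in Theorem~\ref{theorm:pac}. This is the delicate point: the bound must be taken in its worst-case form because the definition of $\rho_t$ is calibrated precisely to absorb the additive $\log N$ term. Substituting yields $\alpha_{t,i}/T_t < \alpha_t^*/T_t - t + \log(N\nu_t)$.

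Finally I would rearrange using the definitions $\rho_t = t/(t + \log(1/(N\nu_t)))$ and $\beta_t = T_t/(2\rho_t)$. Subtracting $\alpha_t^*/T_t$ and recognizing that $t - \log(N\nu_t) = t + \log(1/(N\nu_t)) = t/\rho_t$, the condition becomes $\alpha_{t,i} - \alpha_t^* < -T_t \cdot t/\rho_t = -2t\beta_t$. Dividing by $t > 0$ and splitting the $2\beta_t$ term symmetrically gives
$$\frac{\alpha_{t,i}}{t} + \beta_t < \frac{\alpha_t^*}{t} - \beta_t,$$
as claimed. This final form is exactly the elimination test of Successive-Elimination: $\alpha_{t,i}/t$ plays the role of the empirical mean of arm $i$ (an average of the $t$ bounded, time-independent gradient increments), $\alpha_t^*/t$ that of the empirical best, and $\beta_t$ the shared confidence radius, which sets up the deviation bound of Theorem~\ref{theorm:pac_per_op}.
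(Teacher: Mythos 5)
Your proposal is correct and takes essentially the same route as the paper's own proof: both unfold the softmax pruning rule, apply the log-sum-exp bound $\log\paren{\sum_{j} e^{\alpha_{t,j}/T_t}} \leq \alpha_t^*/T_t + \log(N)$, substitute $\theta_t = \nu_t e^{-t}$, and use the definitions of $\rho_t$ and $\beta_t$ to absorb the $\log(N)$ term and arrive at the successive-elimination form. The only caveat --- shared equally by the paper --- is that this one-sided partition-function bound yields the implication ``pruned $\Rightarrow$ inequality holds'' rather than a literal equivalence, which is exactly the direction needed for Theorem~\ref{theorm:pac}.
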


Although involving the empirical values of $\alpha$, the condition in Claim~\ref{claim:th_as_se} avoids the pruning of the operation with the highest expected $\alpha$. For this purpose we bound the probability for the deviation of each empirical $\alpha$ from its expected value by the specified margin $\beta_t$.
We provide \thmref{theorm:pac_per_op}, which states that for any time t and operation $o_i\in \mathcal{O}$, the value of $\frac{\alpha_{t,i}}{t}$ is within $\beta_t$ of its expected value $\frac{\bar{\alpha}_{t,i}}{t}=\frac{1}{t}\sum_{s=1}^t\E\brackets{g_{s,i}}$,
where,
\begin{align}
    g_{t,i} = -\eta \nabla_{\alpha_{t,i}}\mathcal{L}_{val}(w, \alpha; T_t) \in [-\eta\mathbf{L}, \eta\mathbf{L}].
\end{align}

\begin{theorem}\label{theorm:pac_per_op}
    For any time t and operation $\{o_i\}_{i=1}^N \in \mathcal{O}$, we have,
    \begin{align}
       \pr{\frac{1}{t}\abs{\alpha_{t,i}-\bar{\alpha}_{t,i}} \leq \beta_t} \geq 1-\frac{\delta}{N}.
    \end{align}
\end{theorem}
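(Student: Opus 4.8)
The plan is to read off $\alpha_{t,i}$ as a running sum of the per-step increments. Since $\valpha$ is initialized at $0$ and each gradient step adds $g_{s,i}=-\eta\nabla_{\alpha_{s,i}}\mathcal{L}_{val}$, we have $\alpha_{t,i}=\sum_{s=1}^t g_{s,i}$ and $\bar{\alpha}_{t,i}=\sum_{s=1}^t\E\brackets{g_{s,i}}$, so the quantity of interest is a centered sum,
\begin{align}
  \alpha_{t,i}-\bar{\alpha}_{t,i}=\sum_{s=1}^t\paren{g_{s,i}-\E\brackets{g_{s,i}}}.
\end{align}
By the theorem's hypotheses the increments are independent across time and satisfy $g_{s,i}\in[-\eta\mathbf{L},\eta\mathbf{L}]$, i.e.\ they are independent, zero-mean, and confined to an interval of width $2\eta\mathbf{L}$. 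This is precisely the setting of Hoeffding's inequality, so concentration of $\frac{1}{t}\alpha_{t,i}$ around $\frac{1}{t}\bar{\alpha}_{t,i}$ is immediate once the bookkeeping is done.

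Next I would apply Hoeffding to the sum at deviation level $t\beta_t$:
\begin{align}
  \pr{\abs{\alpha_{t,i}-\bar{\alpha}_{t,i}}>t\beta_t}\leq 2\exp\paren{-\frac{2(t\beta_t)^2}{t\,(2\eta\mathbf{L})^2}}.
\end{align}
The crux is then to substitute the calibrated schedule. Using $\beta_t=\frac{T_t}{2\rho_t}$ together with $T_t=\eta\mathbf{L}\rho_t\sqrt{\tfrac{8}{t}\log\paren{\tfrac{\pi^2 N t^2}{3\delta}}}$, the factor $\rho_t$ cancels, leaving $\beta_t=\eta\mathbf{L}\sqrt{\tfrac{2}{t}\log\paren{\tfrac{\pi^2 N t^2}{3\delta}}}$. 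Plugging this in, the exponent collapses to exactly $\log\paren{\tfrac{\pi^2 N t^2}{3\delta}}$, so the per-step failure probability is $\frac{6\delta}{\pi^2 N t^2}$. This already yields the stated bound for each fixed $t$, since $\frac{6}{\pi^2 t^2}\leq 1$ for all $t\geq 1$ gives $\pr{\frac{1}{t}\abs{\alpha_{t,i}-\bar{\alpha}_{t,i}}\leq\beta_t}\geq 1-\frac{6\delta}{\pi^2 N t^2}\geq 1-\frac{\delta}{N}$.

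Finally I would note that what the downstream successive-elimination argument (Claim~\ref{claim:th_as_se} feeding Theorem~\ref{theorm:pac}) actually needs is that the bound hold \emph{simultaneously for all} $t$, so that the best operation is never wrongly pruned at any point of the run. The summability of the per-step bound delivers this via a union over time,
\begin{align}
  \sum_{t\geq 1}\frac{6\delta}{\pi^2 N t^2}=\frac{6\delta}{\pi^2 N}\cdot\frac{\pi^2}{6}=\frac{\delta}{N},
\end{align}
which is exactly why the factor $\pi^2 t^2$ was inserted inside the logarithm (so that the Basel sum $\sum t^{-2}=\pi^2/6$ normalizes the total failure). Hence, with probability at least $1-\frac{\delta}{N}$, operation $o_i$ obeys $\frac{1}{t}\abs{\alpha_{t,i}-\bar{\alpha}_{t,i}}\leq\beta_t$ at every $t$; a further union bound over the $N$ operations then gives overall failure $\delta$, as required to assemble the $(0,\delta)$-PAC guarantee.

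The main obstacle is not the concentration step itself, which is routine once the increments are identified as independent and bounded, nor the algebraic verification of the $\rho_t$ cancellation and matching of constants. The real leverage — and the weakest point — is the assumption that the gradients are \emph{independent through time}, which is what licenses plain Hoeffding. In practice consecutive SGD steps are strongly correlated, so an honest treatment would instead invoke a martingale-difference structure and Azuma--Hoeffding, at the cost of a slightly worse constant but with the same $\frac{1}{t}\sqrt{t\log(\cdot)}$ rate; the argument above is otherwise unchanged.
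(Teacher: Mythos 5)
Your proof is correct and follows essentially the same route as the paper's: Hoeffding's inequality on the centered sum of independent bounded increments $g_{s,i}$, the calibration $\beta_t=\frac{T_t}{2\rho_t}=\eta\mathbf{L}\sqrt{\tfrac{2}{t}\log\paren{\tfrac{\pi^2Nt^2}{3\delta}}}$ yielding per-time failure probability $\tfrac{6\delta}{\pi^2Nt^2}$, and then a union bound over all $t$ using $\sum_{t\geq1}t^{-2}=\pi^2/6$ to get the uniform-in-time guarantee $1-\tfrac{\delta}{N}$ per operation (which is exactly how the paper reads the statement, via its Corollary~\ref{cor:bound_for_t} and the intersection over $t$). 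Your closing remark about independence versus a martingale/Azuma--Hoeffding treatment is a fair critique of the hypothesis but not a gap, since independence is explicitly assumed in Theorem~\ref{theorm:pac}.
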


Requiring this to hold for all the $N$ operations, we get that the probability of pruning the best operation is below $1-\delta$. 
Choosing an annealing schedule and threshold policy such that $\beta_t$ goes to zero as $t$ increases, guarantees that eventually all operations but the best one are pruned. This leads to the desired result. Full proofs appear in the supplementary material.

\section{Experiments}
\label{sec:exp}
To show the effectiveness of ASAP we test it on common benchmarks and compare to the state-of-the-art.
We experimented on the popular benchmarks, CIFAR-10 and ImageNet, as well as on five other classification datasets, for providing a more extensive evaluation.
In addition, we explore alternative pruning methods and compare them to ASAP.
\subsection{Architecture search on CIFAR-10}
\label{arch_search_cifar}
We search on CIFAR-10 for convolutional cells in a small parent network. 
Then we build a larger network by stacking the learned cells, train it on CIFAR-10 and compare the results against common NAS methods.

We create the parent network by stacking $8$ cells with architecture weights sharing.
Each cell contains $4$ ordered nodes, each of which is connected via mixed operations to all previous nodes in the cell and also to the two previous cells' outputs.
The output of a cell is a concatenation of the outputs of all the nodes within the cell.

The search phase lasts until each mixed operation is fully pruned, or until we reach $50$ epochs. We use the first-order approximation \cite{liu19darts}, relating to \( \valpha \) and \( \vw \) as independent thus can be optimized separately. The train set is divided into two parts of equal sizes: one is used for training the operations' weights $\vw$ and the other for training the architecture weights $\valpha$.

For the gradual annealing, we use Equation~\ref{eq:temp_schedule} with an initial temperature of $T_0=1.3$ and an epoch decay factor of $\beta=0.95$, thus $T\approx 0.1$ at the end of the search phase. Considering we have $N$ possible operations to choose from in a mixed operation, we set our pruning threshold to $\theta_t\equiv\frac{0.4}{N}$. Other hyper-parameters follow \cite{liu19darts}.

As the search progresses, continuous pruning reduces the network size. With a batch size of $96$, one epoch takes $5.8$ minutes in average on a single GPU\footnote{Experiments were performed using a NVIDIA GTX 1080Ti GPU.}, summing up to $4.8$ hours in total for a single search. 




Figure \ref{fig:search_duration} illustrates an example of the epoch duration and the network sparsity ($1$ minus the relative part of the pruned connections) during a search. 
The smooth removal of connections is evident. This translates to a significant decrease in an epoch duration along the search. 
Note that the first five epochs are grace-cycles, where only the network weighs are trained as no operations are pruned.
\begin{figure}
  \begin{center}
    \includegraphics[width=\linewidth]{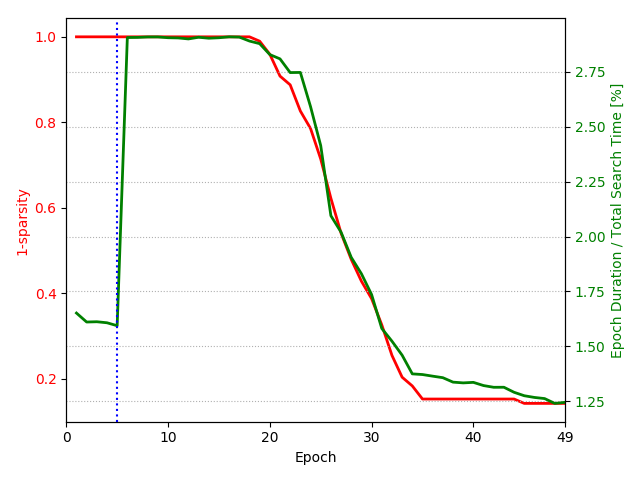}
    \caption{Relative epoch duration during search and model $1-$sparsity vs epoch number. The dotted line represents grace cycles.\label{fig:search_duration}}
    \end{center}
\end{figure}
Figures~\ref{fig:normal cell} and~\ref{fig:reduction cell} show our learned normal and reduction cells, respectively.
\begin{figure}[h!]
  \begin{subfigure}{}
  \begin{center}
    \includegraphics[width=\linewidth]{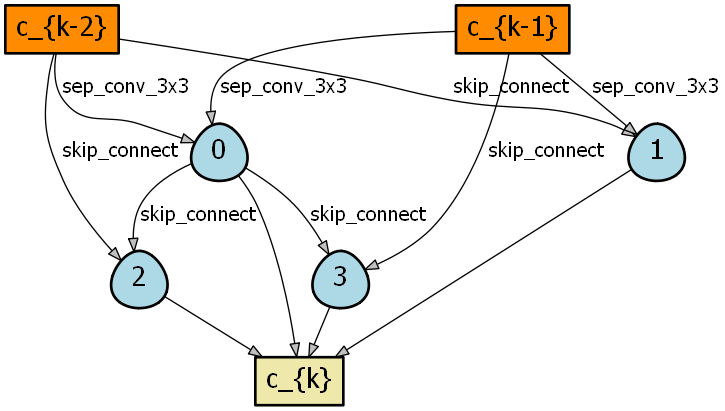}
    \caption{ASAP learned normal cell on CIFAR-10.\label{fig:normal cell}}
    \end{center}
  \end{subfigure}
  \begin{subfigure}{}
  \begin{center}
    \includegraphics[width=\linewidth]{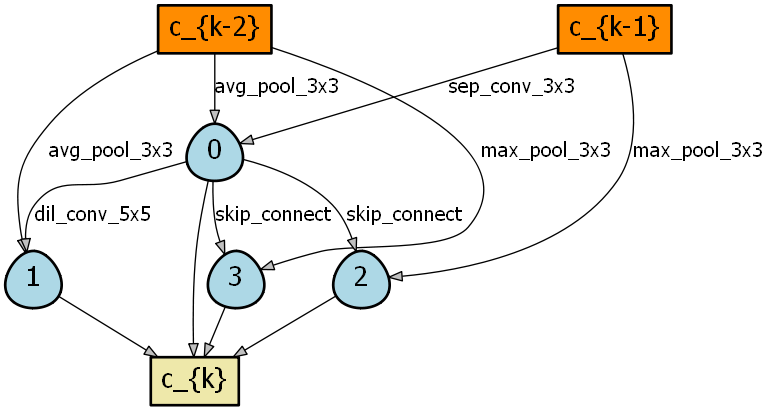}
    \caption{ASAP learned reduction cell on CIFAR-10.\label{fig:reduction cell}}
    \end{center}
  \end{subfigure}
\end{figure}

\subsection{CIFAR-10 Evaluation Results}\label{CIFAR-10 Evaluation Results}
We built the evaluation network by stacking $20$ cells, $18$ normal cells and $2$ reduction cells. We place the reduction cells after $1/3$ and $2/3$ of the network, where after each reduction we double the amount of channels in the network. 
We trained the network for $1500$ epochs using a batch size of $128$ and SGD optimizer with nesterov-momentum.
Our learning rate regime was composed of $5$ cycles of power cosine annealing learning rate \cite{hundt2019sharpdarts}, with amplitude decay factor of 0.5 per cycle.
For regularization we used cutout \cite{devries2017improved}, auxiliary towers \cite{szegedy2015going}, scheduled drop-path \cite{larsson2016fractalnet},  label smoothing, AutoAugment \cite{AutoAug} and weight decay. 
To understand the effect of the network size on the final accuracy, we chose to test $3$ architecture configurations with $36$, $44$ and $50$ initial network channels, which we named respectively ASAP-Small, ASAP-Medium and ASAP-Large. 

Table~\ref{cifar-10-results} shows the performance of our learned models compared to other state-of-the-art NAS methods. Figure \ref{fig:cifar_acc_days} provides an even more comprehensive comparison in a graphical manner.
\begin{table}
    \begin{center}
        \begin{tabular}{l|c|c|c}
            Architecture
            & \multicolumn{1}{|p{1.4cm}}{\centering Test error\\(\%)}
            & \multicolumn{1}{|p{1.05cm}}{\centering Params\\(M) }
            & \multicolumn{1}{|p{1.0cm}}{\centering Search\\cost $\downarrow$}\\
            \hline
            AmoebaNet-A \cite{Real18Regularized}     & 3.34 $\pm$ 0.06       & 3.2  & 3150 \\
            AmoebaNet-B \cite{Real18Regularized} & 2.55 $\pm$ 0.05       & 2.8  & 3150 \\
            NASNet-A \cite{NASNET}  & 2.65      & 3.3  & 1800 \\
            PNAS \cite{PNAS} & 3.41  & 3.2  & 150 \\
            \hline
            SNAS \cite{snas} & 2.85 $\pm$ 0.02 & 2.8 & 1.5 \\
                        DSO-NAS \cite{DSO} & 2.95 $\pm$ 0.12 & 3 & 1 \\
           PARSEC \cite{casale2019probabilistic} & 2.81 $\pm$ 0.03 & 3.7 & 1 \\          
            DARTS(2nd) \cite{liu19darts}      & 2.76 $\pm$ 0.06        & 3.4  & 1 \\           
            PC-DARTS DL2 \cite{laube2019prune}      & $2.51\pm 0.09$ & 4.0 & 0.82 \\
            
            DARTS+ \cite{liang2019darts+}      & $2.37\pm 0.13$ & 4.3 & 0.6 \\
            
            ENAS \cite{ENAS} & 2.89  & 4.6  & 0.5 \\
            DARTS(1nd) \cite{liu19darts}      & 2.94       & 2.9  & 0.4 \\
            
            P-DARTS \cite{chen2019progressive} & 2.50  & 3.4  & 0.3 \\
            DARTS(1nd) \cite{liu19darts}      & 2.94       & 2.9  & 0.4 \\

            NAONet-WS \cite{NAO} & 3.53 & \bf{2.5} & 0.3 \\    
            \hline
            ASAP-Small & 1.99 & \bf{2.5} & \bf{0.2} \\
            ASAP-Medium & 1.75 & 3.7 & \bf{0.2} \\
            ASAP-Large & \bf{1.68} & 6.0 & \bf{0.2} \\
        \end{tabular}
    \end{center}
    \caption{Classification errors of ASAP compared to state-of-the-art NAS methods on CIFAR-10. The Search cost is measured in GPU days.}
    \label{cifar-10-results}
\end{table}

Table \ref{cifar-10-results} and Figure \ref{fig:cifar_acc_days} show that our ASAP based network outperforms previous state-of-the-art NAS methods, both in terms of the classification error and the search time.
\subsection{Transferability Evaluation}
Using the cell found by ASAP search on CIFAR-10, we preformed transferability tests on $6$ popular classification benchmarks: ImageNet, CINIC10, Freiburg, CIFAR-100, SVHN and Fashion-MNIST.

\textbf{ImageNet} \label{imagenet_paragraph}
For testing ASAP cell transferability performance on larger datasets, we experimented on the popular ImageNet dataset \cite{imagenet_cvpr09}. 
The network was composed of $14$ stacked cells, with two initial stem cells. We used 50 initial channels, so the total number of network FLOPs is below 600[M], similar to other ImageNet architectures with small computation regime\cite{liu19darts}.
We trained the network for $250$ epochs using a nesterov-momentum optimizer.
The results are presented in Table~\ref{datasets-tranfer-results}.
\begin{table*}
    \begin{center}
        \begin{tabular}{l|c|c|c|c|c|c|c}
    
            Architecture 
            & \multicolumn{1}{|p{1.45cm}}{\centering CINIC-10\\Error(\%) }
            & \multicolumn{1}{|p{1.65cm}}{\centering FREIBURG\\Error(\%) }
            & \multicolumn{1}{|p{1.7cm}}{\centering CIFAR-100\\Error(\%) }
            & \multicolumn{1}{|p{1.2cm}}{\centering SVHN\\Error(\%) }
            & \multicolumn{1}{|p{1.4cm}}{\centering FMNIST\\Error(\%) }
            & \multicolumn{1}{|p{1.4cm}}{\centering ImageNet\\Error(\%) }
            & \multicolumn{1}{|p{1.4cm}}{\centering Search\\cost }\\
            \hline
            Known SotA 
            & 8.6 \cite{cinic10} 
            & 21.1 \cite{Freiburg}
            & 8.7 \cite{huang2018gpipe} 
            & 1.02 \cite{AutoAug} 
            & 3.65 \cite{zhong2017random}
            & 15.7 \cite{huang2018gpipe} 
            & - \\
            \hline
            AmoebaNet-A \cite{Real18Regularized}  & 7.18 & 11.8 & 15.9 & 1.93 & 3.8 & \textbf{24.3} & 3150\\
            NASNet \cite{NASNET} & 6.93 & 13.4 & 15.8 & 1.96 & 3.71 & 26.0 & 1800\\

            PNAS \cite{PNAS}  & 7.03 & 12.3 & 15.9 & 1.83 & 3.72 & 25.8 & 150\\   
            SNAS \cite{snas}  & 7.13 & 14.7 & 16.5 & 1.98 & 3.73 & 27.3 & 1.5\\                 
            DARTS-Rev1 \cite{liu19darts}  & 7.05 & 11.4 &  15.8 & 1.94 & 3.74 & 26.9 & 1\\ 
            DARTS-Rev2 \cite{liu19darts}  
            & 6.88
            & 10.8 
            & 15.7
            & 1.85
            & \textbf{3.68} 
            & 26.7 
            & 1\\
       
            \hline
            ASAP 
            & \textbf{6.83}
            & \textbf{10.7}
            & \textbf{15.6}
            & \textbf{1.81} 
            & 3.73 & 24.4 & \bf{0.2}\\
            \end{tabular}
    \end{center}
    \caption{Transferability classification error of ASAP, compared to top NAS cells, on several vision classification datasets. Error refers to top-1 test error. Search cost is measured in GPU days. DARTS Rev1 and Rev2 refers to the best 2nd order cells published in revision 1 and 2 of \cite{liu19darts} respectively.}
    \label{datasets-tranfer-results}
\end{table*}
It can be seen that the cell found by ASAP transfers well to ImageNet - accuracy second only to \cite{Real18Regularized}, with significant faster search time.

\textbf{Other classification datasets}
We further extend our transferability testing by training our ASAP cell and other top published NAS cells on $5$ additional benchmarks: CINIC10,  Freiburg, CIFAR-100, SVHN and Fashion-MNIST. Datasets details appear in \ref{train_details}. For a fair comparison, we trained all of the cells using the publicly available DARTS training code \footnote{https://github.com/quark0/darts}, with exactly the same network configuration and hyperparameters, except from the cell itself. Results are presented in Table \ref{datasets-tranfer-results}.

Table \ref{datasets-tranfer-results} shows that our ASAP cell transfers well to other computer vision datasets, surpassing all other NAS cells in four out of five datasets tested, while having the lowest search cost. In addition, note that on two datasets, CINIC-10 and FREIBURG, our ASAP network accuracy is better than previously known state-of-the-art.
\subsection{Other Pruning Methods} 
\label{sec:other}
In addition to comparisons to other SotA algorithms, we also evaluate alternative pruning-during-training techniques. 
We select two well known pruning approaches and adapt those to the neural architecture search framework.
The first approach is magnitude based. 
It naively cuts connections with the smallest weights, as is the practice in some network pruning strategies, e.g. \cite{Han2015Learning, toPrune} (as well as in \cite{liu19darts}, yet with a harsh single pruning-after-training step).
The second approach \cite{accum} prunes connections with the lowest accumulated absolute value of the gradients of the loss with respect to $\valpha$. 
Our evaluation is based on the number of operations to prune in each step, according to the formula suggested in~\cite{toPrune}:
\begin{eqnarray}
\label{eq:sparsity_schedule}
 s_t = s_f + (s_i-s_f)\paren{1-\frac{t-t_0}{n\Delta t}}^p 
 \end{eqnarray} 
 where $p\in \braces{1,3}$, $t\in\braces{t_0, t_0+\Delta t,...,t_0+n\Delta t}$, $s_f$ is the final desired sparsity (the fraction of pruned weights), $s_i$ is the initial sparsity, which is $0$ in our case, $t_0$ is the iteration we start the pruning at, and $E_f = t_0+n\Delta t$ is the total number of iterations used for the search. For $p=1$ the pruning rate is constant, while for $p=3$ it decreases. 
The evaluation was performed on CIFAR-10 with $t_0=20$ and $E_f$ in the range from $50$ to $70$. Table \ref{ablation table} presents the results. 
\begin{figure*}[t]
\begin{center}
      \begin{minipage}{0.465\textwidth}
        \begin{center}
           \includegraphics[width=\linewidth]{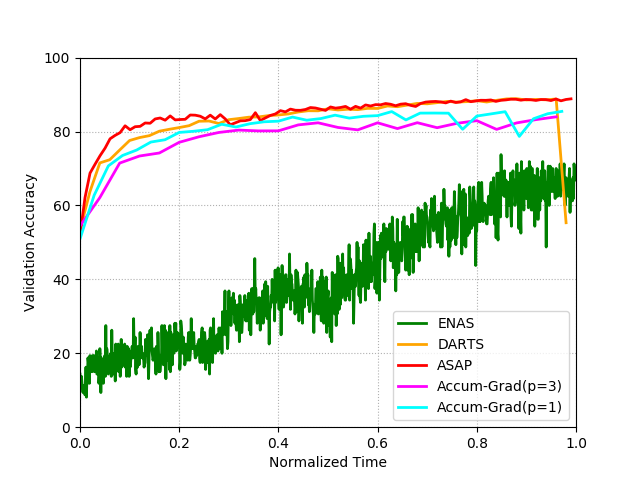}
           \caption{Search validation accuracy for ENAS, DARTS, ASAP and Accum-Grad over time (scaled to [0, 1]).}
        \label{fig:accuracies}
        \end{center}
      \end{minipage}
      \begin{minipage}{0.05\textwidth}
          \includegraphics[width=\linewidth]{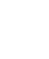}
      \end{minipage}
      \begin{minipage}{0.45\textwidth}
        \begin{center}
           \includegraphics[width=\linewidth]{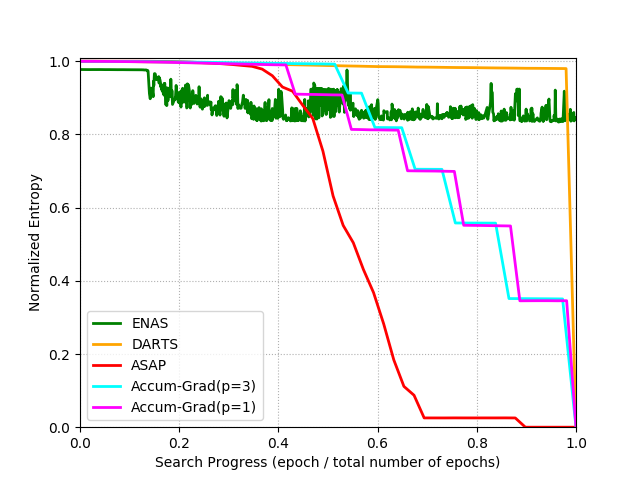}
           \caption{The total normalized entropy during search of normal and reduction cells for ENAS,
           DARTS, ASAP and Accum-Grad vs the time scaled to [0, 1].}
           \label{fig:entropies}
        \end{center}
      \end{minipage}
  \end{center}
\end{figure*}

We can see from Table \ref{ablation table} that both pruning methods achieved lower accuracy than ASAP on CIFAR-10, with a larger memory footprint.

\subsection{Search Process Analysis }

In this section we wish to provide further insights as to why ASAP leads to a higher accuracy with a lower search time. 
To do that we explore two properties along the iterative learning process: the validation accuracy and the cell entropy.
We compare ASAP to two other efficient methods: DARTS~\cite{liu19darts}, and the reinforcement-learning based ENAS~\cite{ENAS}.
We further compare to the pruning alternative based on accumulated gradients described in Section \ref{sec:other}, as it achieved better results on CIFAR-$10$ them magnitude pruning, as shown in Table \ref{ablation table}.
We compare with both $p\in \braces{1,3}$.

Figure \ref{fig:accuracies} presents the validation accuracy along the architecture search iterations.
ENAS achieves low and noisy accuracy along the search, inferior to all differentiable-space based methods. 
DARTS accuracy climbs nicely across iterations, but then significantly drops at the end of the search due to the relaxation bias following the harsh prune.
The pruning-during-training methods suffer from perturbations 
in accuracy following pruning, however, ASAP's perturbations are frequent and smaller, as its architecture-weights which get pruned are insignificant. Its validation accuracy quickly
recovers due to the continuous annealing of weights.
Therefore, With reduced network complexity, it achieves the highest accuracy at the end.
\begin{table}
    \begin{center}
        \begin{tabular}{l|c|c}

            Architecture type 
            & \multicolumn{1}{|p{2.0cm}|}{\centering Test error \\ Top-$1$(\%) }
            & \multicolumn{1}{|p{2.0cm}}{\centering Params \\ (M) } \\
            
                        \hline
            DARTS(2st) &  2.76 & 3.4  \\  
            \hline
            Magnitude prune & 2.9 & 4.4  \\   
            Accum-Grad prune & 2.76 & 4.3 \\ 
            \hline
            ASAP-Small & 1.99 & 2.5  \\
      \end{tabular}
    \end{center}
    \caption{Comparison of pruning methods on CIFAR-10.} 
    \label{ablation table}
\end{table}
Figure~\ref{fig:entropies} illustrates the average cell entropy over time, averaged across all the mixed operations within the cell, normalized by $\ln{\paren{|\mathcal{O}|}}$. 
While the entropy of DARTS and ENAS remain high along the entire search, the entropy of the pruning methods decrease to zero as those do not suffer from a relaxation bias. 
The high entropy of DARTS when final prune is done leads to a quite arbitrary selection of a child model, as top architecture weights are of comparable value.
It can be seen that the entropy of ASAP decreases gradually, hence allowing efficient optimization in the differentiable-annealable space. This provides further motivation to the advantages of ASAP.

\section{Conclusion}
\label{sec:conc}
In this paper we presented ASAP, an efficient state-of-the-art neural architecture search framework. 
The key contribution of ASAP is the annealing and gradual pruning of connections during the search process.
This approach avoids hard pruning of connections that is used by other methods.
The gradual pruning decreases the architecture search time, while improving the final accuracy due to smoother removal of connections during the search phase. 

On CIFAR-$10$, our ASAP cell outperforms other published cells, both in terms of search cost and in terms of test error. 
We also demonstrate the effectiveness of ASAP cell by showing good transferability quality compared to other top NAS cells on multiple computer vision datasets. 


{\small
\bibliographystyle{ieee}
\bibliography{references}
}

\newpage
\clearpage
\section{Supplementary Material}
\subsection{Proof of Theorem \ref{theorm:pac}}
\label{pac_bounds_in_details}
Let us set some terms,
\begin{align}
    g_{t,i} = -\eta \nabla_{\alpha_{t,i}}\mathcal{L}_{val}(w, \alpha; T_t)
\end{align}
Then we have $|g_{t,i}| \leq \eta\cdot\mathbf{L}$ and,
\begin{align}
   \alpha_{t,i} &= \alpha_{0,i} - \eta \sum_{s=0}^t \nabla_{\alpha_{s,i}}\mathcal{L}_{val}(w, \alpha; T_t)
   = \sum_{s=0}^t g_{s,i}
\end{align}
As we initialize $\alpha_{0,i}=0$ for all $i=1,\dots,N$ in \ref{init}. \\
Then we prove claim \ref{claim:th_as_se},
\begin{proof}\label{proof:threshold_as_successive_elimination}
    The pruning rule can be put as following,
    \begin{align}
        &\Phi_{o_i}(\alpha_{t}; T_t) < \theta_t \\
        &\frac{e^{\frac{\alpha_{t,i}}{T_t}}}{\sum_{j=1}^N e^{\frac{\alpha_{t,i}}{T_t}}} < \theta_t \\
        \alpha_{t,i} &< T_t\log\paren{\sum_{j=1}^N e^{\frac{\alpha_{t,i}}{T_t}}} -T_t\log\paren{\frac{1}{\theta_t}} \\
        &\leq T_t\paren{\frac{\alpha_t^*}{T_t} + \log(N)} -T_t\log\paren{\frac{1}{\theta_t}} \label{bounding_log_sum_exp}\\
        &< \alpha_t^* - T_t\paren{\log\paren{\frac{1}{\theta_t}} - \log(N)}\\
        &< \alpha_t^* -T_t\paren{ t+\log\paren{\frac{1}{N\nu_t}}} \label{theta_to_nu}\\
        &< \alpha_t^* - t T_t\rho_t^{-1} \\
        &<  \alpha_t^* -2t\beta_t
    \end{align}
    Where \ref{theta_to_nu} is by setting $\theta_t=\nu_t e^{-t}$ and \ref{bounding_log_sum_exp} is since,
    \begin{align} \label{lse}
        \log\paren{\sum_{i=1}^N e^{x_{i}}} \leq \max_i x_i + \log(N)
    \end{align}
    Finally we have,
    \begin{align}
        \frac{\alpha_{t,i}}{t} + \beta_t &<  \frac{\alpha_t^*}{t} -\beta_t,        
    \end{align}
\end{proof}

We wish to bound the probability of pruning the best operation, i.e. the operation with the highest expected architecture parameter $\alpha$.
Although involving the empirical values of $\alpha$, we show that the condition in claim \ref{claim:th_as_se} avoids the pruning of the operation with the highest expected $\alpha$. For this purpose we bound the probability for the deviation of each empirical $\alpha$ from its expected value by the specified margin $\beta_t$. 
For this purpose we introduce the following concentration bound,
\begin{lemma}[Hoeffding \cite{hoeffding1963probability}]\label{lemma:hoeffding}
    Let $g_1,\dots, g_t$ be independent bounded
    random variables with $g_s \in [a_s,b_s]$, where -$\infty < a_s \leq b_s < \infty$ for all $s=1,\dots,t$. Then,
    \begin{align}
       &\pr{\frac{1}{t}\sum_{s=1}^t (g_s - \E\brackets{g_s}) \geq \,\,\,\,\beta} \leq e^{-\frac{2\beta^2 t^2}{\sum_{s=1}^t \paren{b_s-a_s}^2}}
       \\
       &\pr{\frac{1}{t}\sum_{s=1}^t (g_s - \E\brackets{g_s}) \leq -\beta} \leq e^{-\frac{2\beta^2 t^2}{\sum_{s=1}^t \paren{b_s-a_s}^2}}
    \end{align}
\end{lemma}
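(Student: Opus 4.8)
The plan is to recognize $\alpha_{t,i}-\bar{\alpha}_{t,i}$ as a centered sum of independent bounded increments and then invoke Hoeffding's inequality (\lemref{lemma:hoeffding}) directly. From the setup, the per-step increment is $g_{s,i}=-\eta\nabla_{\alpha_{s,i}}\mathcal{L}_{val}(w,\alpha;T_s)\in[-\eta\mathbf{L},\eta\mathbf{L}]$, the running parameter is $\alpha_{t,i}=\sum_{s=1}^{t}g_{s,i}$ (using the initialization $\alpha_{0,i}=0$ from line \ref{init}), and $\bar{\alpha}_{t,i}=\sum_{s=1}^{t}\E[g_{s,i}]$, so that
\begin{align}
\alpha_{t,i}-\bar{\alpha}_{t,i}=\sum_{s=1}^{t}\paren{g_{s,i}-\E[g_{s,i}]}.
\end{align}
Under the independence-through-time assumption of \thmref{theorm:pac}, the $g_{s,i}$ are independent, so this is exactly a sum of $t$ independent, mean-zero, $[-\eta\mathbf{L},\eta\mathbf{L}]$-bounded random variables, the object controlled by Hoeffding.

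First I would apply the two one-sided bounds of \lemref{lemma:hoeffding} with $a_s=-\eta\mathbf{L}$ and $b_s=\eta\mathbf{L}$, so that $(b_s-a_s)^2=4\eta^2\mathbf{L}^2$ and $\sum_{s=1}^{t}(b_s-a_s)^2=4t\eta^2\mathbf{L}^2$. Each tail is then bounded by $\exp\paren{-\beta_t^2 t/(2\eta^2\mathbf{L}^2)}$, and summing the upper and lower tails gives
\begin{align}
\pr{\tfrac{1}{t}\abs{\alpha_{t,i}-\bar{\alpha}_{t,i}}\geq\beta_t}\leq 2\,e^{-\beta_t^2 t/(2\eta^2\mathbf{L}^2)}.
\end{align}
The crux is to substitute the prescribed schedule. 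Writing $\beta_t=T_t/(2\rho_t)$ and inserting $T(t)=\eta\mathbf{L}\rho_t\sqrt{(8/t)\log(\pi^2 N t^2/(3\delta))}$, the factor $\rho_t$ cancels and leaves $\beta_t=\eta\mathbf{L}\sqrt{(2/t)\log(\pi^2 N t^2/(3\delta))}$, whence $\beta_t^2 t/(2\eta^2\mathbf{L}^2)=\log(\pi^2 N t^2/(3\delta))$ exactly. The failure probability therefore collapses to $2\cdot 3\delta/(\pi^2 N t^2)=6\delta/(\pi^2 N t^2)$, which for every fixed $t\geq 1$ is at most $\delta/N$ since $6/(\pi^2 t^2)\le 6/\pi^2<1$; taking complements yields the claimed bound.

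The step I expect to require the most care is the algebraic bookkeeping around $\rho_t$ and the constants, rather than any probabilistic subtlety: the $8$ under the root, the factor $2$ in $\beta_t=T_t/(2\rho_t)$, and the $\pi^2/3$ inside the logarithm are all tuned so that the Hoeffding exponent is precisely $\log(\pi^2 N t^2/(3\delta))$. I would also emphasize the role of the $t^2$ term, which is what the downstream argument needs: although the theorem is stated pointwise in $t$, the $\pi^2 t^2/3$ factor is engineered so that a union bound over all times sums to $\sum_{t\ge1}6\delta/(\pi^2 N t^2)=\delta/N$ via $\sum_{t\ge1}t^{-2}=\pi^2/6$, giving a per-operation guarantee that holds uniformly over time; a final union bound over the $N$ operations then produces the overall $1-\delta$ PAC guarantee of \thmref{theorm:pac}.
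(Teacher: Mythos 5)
The statement you were asked to prove is Hoeffding's inequality itself (\lemref{lemma:hoeffding}), and your proposal never proves it: your opening move is to \emph{invoke} \lemref{lemma:hoeffding} directly, which makes the argument circular as a proof of that lemma. What you have actually written is the paper's downstream application of it --- essentially the proofs of \corref{cor:bound_for_t} and \thmref{theorm:pac_per_op}: the decomposition $\alpha_{t,i}-\bar{\alpha}_{t,i}=\sum_{s=1}^t\paren{g_{s,i}-\E\brackets{g_{s,i}}}$, the two one-sided tails with $(b_s-a_s)^2=4\eta^2\mathbf{L}^2$, the substitution $\beta_t=T_t/(2\rho_t)=\eta\mathbf{L}\sqrt{(2/t)\log\paren{\pi^2Nt^2/(3\delta)}}$ collapsing the failure probability to $6\delta/(\pi^2Nt^2)$, and the summation $\sum_{t\geq 1}t^{-2}=\pi^2/6$ yielding the uniform-in-time bound $\delta/N$. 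That bookkeeping is correct and matches the paper's supplementary proofs step for step, but it belongs to the corollary and theorem, not to the lemma. For the lemma itself the paper offers no proof at all: it is quoted as a classical result from \cite{hoeffding1963probability}, so there is nothing in your write-up that establishes the stated inequalities.

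A genuine proof of \lemref{lemma:hoeffding} requires the Chernoff--Cram\'er method combined with Hoeffding's moment-generating-function bound, neither of which appears in your proposal. Concretely, set $X_s=g_s-\E\brackets{g_s}$ and $S=\sum_{s=1}^t(b_s-a_s)^2$. One first shows $\E\brackets{e^{\lambda X_s}}\leq e^{\lambda^2(b_s-a_s)^2/8}$ for all $\lambda>0$ (via convexity of $x\mapsto e^{\lambda x}$ on $[a_s,b_s]$ and a second-order bound on the resulting cumulant); then by Markov's inequality and independence, $\pr{\sum_{s=1}^t X_s\geq t\beta}\leq e^{-\lambda t\beta}\prod_{s=1}^t\E\brackets{e^{\lambda X_s}}\leq \exp\paren{-\lambda t\beta+\tfrac{\lambda^2}{8}S}$, and optimizing at $\lambda=4t\beta/S$ gives exactly $e^{-2\beta^2t^2/S}$; the lower tail follows by applying the same bound to $-X_s$. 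Without this MGF step your argument assumes precisely what it is supposed to prove; as a proof of the corollary it would be fine, but as a proof of the lemma it has a fatal gap.
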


Our main argument, described in \thmref{theorm:pac_per_op}, is that at any time t and for any operation $o_i$, 
the empirical value $\frac{\alpha_{t,i}}{t}$ is within $\beta_t$ of its expected value $\frac{\bar{\alpha}_{t,i}}{t}=\frac{1}{t}\sum_{s=1}^t\E\brackets{g_{s,i}}$. For the purpose of proving \thmref{theorm:pac_per_op}, we first prove the following \corref{cor:bound_for_t},
\begin{corollary}\label{cor:bound_for_t}
    For any time t and operations $\{o_i\}_{i=1}^N \in \mathcal{O}$ we have,
    \begin{align}
       \pr{\frac{1}{t}\abs{\alpha_{t,i}-\bar{\alpha}_{t,i}} > \beta_t} \leq 
       \frac{6\delta}{\pi^2 N t^2}
    \end{align}
\end{corollary}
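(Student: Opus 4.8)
The plan is to recognize the left-hand side as the deviation of a normalized sum of independent, bounded increments from its mean, and then invoke the Hoeffding bound of \lemref{lemma:hoeffding} with the specific value of $\beta_t$ dictated by the annealing schedule. First I would record the two ingredients already established above: since $\alpha_{0,i}=0$, we have $\alpha_{t,i}=\sum_{s=1}^t g_{s,i}$ and $\bar{\alpha}_{t,i}=\sum_{s=1}^t \E\brackets{g_{s,i}}$, so that
\begin{align}
\frac{1}{t}\paren{\alpha_{t,i}-\bar{\alpha}_{t,i}} = \frac{1}{t}\sum_{s=1}^t\paren{g_{s,i}-\E\brackets{g_{s,i}}}.
\end{align}
Each increment satisfies $g_{s,i}\in[-\eta\mathbf{L},\eta\mathbf{L}]$, and the independence-through-time hypothesis of \thmref{theorm:pac} guarantees that the $g_{s,i}$ are independent across $s$, so \lemref{lemma:hoeffding} applies with $b_s-a_s=2\eta\mathbf{L}$ for every $s$.

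Next I would apply both one-sided forms of \lemref{lemma:hoeffding} at the level $\beta=\beta_t$. Using $\sum_{s=1}^t\paren{b_s-a_s}^2 = 4t\eta^2\mathbf{L}^2$, the upper tail is controlled by
\begin{align}
\pr{\frac{1}{t}\sum_{s=1}^t\paren{g_{s,i}-\E\brackets{g_{s,i}}} \geq \beta_t} \leq \exp\paren{-\frac{\beta_t^2 t}{2\eta^2\mathbf{L}^2}},
\end{align}
and the identical bound holds for the lower tail. A union bound over the two tails then gives $\pr{\frac{1}{t}\abs{\alpha_{t,i}-\bar{\alpha}_{t,i}} > \beta_t} \leq 2\exp\paren{-\beta_t^2 t/(2\eta^2\mathbf{L}^2)}$, so the whole statement reduces to evaluating this exponent.

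The remaining step, which is the crux, is to substitute the schedule and watch the exponent collapse by design. Since $\beta_t=T_t/(2\rho_t)$ and $T_t=\eta\mathbf{L}\rho_t\sqrt{(8/t)\log\paren{\pi^2 N t^2/(3\delta)}}$, the factors of $\rho_t$ cancel and $\beta_t=\frac{\eta\mathbf{L}}{2}\sqrt{(8/t)\log\paren{\pi^2 N t^2/(3\delta)}}$, whence $\beta_t^2=\frac{2\eta^2\mathbf{L}^2}{t}\log\paren{\pi^2 N t^2/(3\delta)}$ and therefore $\frac{\beta_t^2 t}{2\eta^2\mathbf{L}^2}=\log\paren{\pi^2 N t^2/(3\delta)}$. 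Plugging this back makes each tail probability equal to $3\delta/(\pi^2 N t^2)$, and the union bound over the two tails yields the claimed $6\delta/(\pi^2 N t^2)$. I expect no genuine difficulty beyond this algebra: the schedule was evidently reverse-engineered so that the Hoeffding exponent matches $\log\paren{\pi^2 N t^2/(3\delta)}$ exactly. The only points demanding care are the bookkeeping of the summation index (confirming the initialization $\alpha_{0,i}=0$ lets the sum start at $s=1$) and the factor $\paren{b_s-a_s}^2=4\eta^2\mathbf{L}^2$ arising from the symmetric range of $g_{s,i}$.
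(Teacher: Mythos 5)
Your proposal is correct and follows essentially the same route as the paper: decompose the two-sided deviation into two one-sided tails, apply the Hoeffding bound of \lemref{lemma:hoeffding} with $g_{s,i}\in[-\eta\mathbf{L},\eta\mathbf{L}]$, take a union bound over the two tails, and substitute $\beta_t$ so the exponent equals $\log\paren{\pi^2 N t^2/(3\delta)}$. The only (harmless) difference is that you derive $\beta_t=\eta\mathbf{L}\sqrt{\frac{2}{t}\log\paren{\frac{\pi^2 N t^2}{3\delta}}}$ explicitly from $\beta_t=T_t/(2\rho_t)$ and the annealing schedule, whereas the paper simply sets $\beta_t$ to that value directly.
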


\begin{proof}
    \begin{align}
       &\pr{\frac{1}{t}\abs{\alpha_{t,i}-\bar{\alpha}_{t,i}} > \beta_t} 
       \\&=
       \pr{\frac{\alpha_{t,i}}{t}-\frac{\bar{\alpha}_{t,i}}{t} > \beta_t \bigcup \frac{\alpha_{t,i}}{t}-\frac{\bar{\alpha}_{t,i}}{t} < -\beta_t} 
       \\ &\leq \label{hoeffding_union_bound}
       \pr{\frac{\alpha_{t,i}}{t}-\frac{\bar{\alpha}_{t,i}}{t} > \beta_t} +  \pr{\frac{\alpha_{t,i}}{t}-\frac{\bar{\alpha}_{t,i}}{t} < -\beta_t} 
       \\ &\leq  \notag
       \pr{\frac{1}{t}\sum_{s=1}^t (g_{s,i} - \E\brackets{g_{s,i}}) \geq \beta_t} 
       \\&+\pr{\frac{1}{t}\sum_{s=1}^t (g_{s,i}- \E\brackets{g_{s,i}}) \leq -\beta_t} 
       \\ &\leq \label{double_hoeffding}
       2e^{-\frac{2\beta_t^2 t}{4\eta^2\mathbf{L}^2}}
       \\&\leq \label{setting_beta_t}
       \frac{6\delta}{\pi^2 N t^2} 
    \end{align}
    Where, \ref{hoeffding_union_bound} is by the union bound, \ref{double_hoeffding} is by \lemref{lemma:hoeffding} with $g_{t,i}\in[-\eta\mathbf{L},\eta\mathbf{L}]$ and \ref{setting_beta_t} is by setting,
    \begin{align} 
        \beta_t=\eta\mathbf{L} \sqrt{\frac{2}{t}log\paren{\frac{\pi^2 N t^2}{3\delta}}}
    \end{align}
\end{proof}

We can now prove \thmref{theorm:pac_per_op},
\begin{proof}
    \begin{align}
       &\pr{\bigcap_{t=1}^\infty\frac{1}{t}\abs{\alpha_{t,i}-\bar{\alpha}_{t,i}} \leq \beta_t} 
       \\&=  
       1-\pr{\bigcup_{t=1}^\infty\frac{1}{t}\abs{\alpha_{t,i}-\bar{\alpha}_{t,i}} > \beta_t} 
       \\&\geq \label{union_bound_over_t}
       1-\sum_{t=1}^\infty\pr{\frac{1}{t}\abs{\alpha_{t,i}-\bar{\alpha}_{t,i}} > \beta_t} 
       \\&\geq \label{hoeffding_bound_over_t}
       1-\frac{6\delta}{\pi^2 N}\sum_{t=1}^\infty \frac{1}{t^2}
       \\&= 
       1 - \frac{\delta}{N}
    \end{align}
    where \eqref{union_bound_over_t} is by the union bound, \eqref{hoeffding_bound_over_t} is by \corref{cor:bound_for_t}.
\end{proof}
\bigskip

Requiring \thmref{theorm:pac_per_op} to hold for all of the operations, we get by the union bound that, the probability of pruning the best operation is less than $\delta$. 
Furthermore, since $\nu_t \in\Upsilon$, $\rho_t$ goes to 1 as t increases and $T_t$ goes to zero together with $\beta_t$. Thus eventually all operations but the best one are pruned. This completes our proof.


\subsection{ASAP search and train details}
In order to conduct a fair comparison, we follow \cite{liu19darts} for all the search and train details. This excludes the new annealing parameters and those related to the training of additional datasets, which are not mentioned in \cite{liu19darts}. Our code will be made available for future public use.

\subsubsection{Search details}
{\bf Data pre-processing.} We apply the following:
\begin{itemize}
    \item Centrally padding the training images to a size of $40$x$40$.
    \item Randomly cropping back to the size of $32$x$32$.
    \item Randomly flipping the training images horizontally.
    \item Standardizing the train and validation sets to be of a zero-mean and a unit variance.
\end{itemize}
{\bf Operations and cells.} We select from the operations mentioned in \ref{arch_search_cifar}, with a stride of $1$ for all of the connections within a cell but for the reduction cells' connections to the previous cells, which are with a stride of $2$. 
Convolutional layers are padded so that the spatial resolution is kept. The operations are applied in the order of ReLU-Conv-BN. Following \cite{zophNasRL},\cite{Real18Regularized}, depthwise separable convolutions are always applied twice. 
The cell's output is a $1$x$1$ convolutional layer applied on all of the cells' four intermediate nodes' outputs concatenated, such that the number of channels is preserved.
In CIFAR-$10$, the search lasts up to $0.2$ days on NVIDIA GTX 1080Ti GPU. \\
{\bf The annealing schedule.}  
We use the exponential decay annealing schedule as described in Algorithm \ref{alg:ASAP} when setting the annealing schedule as in \ref{eq:temp_schedule} with $\paren{T_0,\beta,\tau}=\paren{1.6,0.95,1}$. It was selected for obtaining a final temperature of $0.1$ and $5$ epochs of grace-cycles.\\
{\bf Alternate optimization.} 
For a fair comparison, we use the exact training settings from \cite{liu19darts}. We use the Adam optimizer \cite{kingma2014adam} for the architecture weights optimization with momentum parameters of $\paren{0.5,0.999}$ and a fixed learning rate of $10^{-3}$. For the network weights optimization, we use SGD \cite{robbins1951stochastic} with a momentum of $0.9$ as the learning rate is following a cosine annealing schedule \cite{loshchilov2016sgdr} with an initial value of $0.025$.

\subsubsection{Training details}
\label{train_details}
{\bf CIFAR-10.} \label{cifar10_training}
The training architecture consists of $20$ cells stacking up: $18$ normal cells and $2$ reduction cells, located at the $1/3$ and $2/3$ of the total network depth respectively.
We double the number of channels after each reduction cell.
We train the network for $1500$ epochs with a batch size of $128$. We use the SGD nesterov-momentum optimizer \cite{robbins1951stochastic} with a momentum of $0.9$, following a cycles cosine annealing learning rate \cite{loshchilov2016sgdr} with an initial value of $0.025$. We apply a weight decay of $3\cdot10^{-4}$ and a norm gradient clipping at $5$. We add an auxiliary loss \cite{szegedy2015going} after the last reduction cell with a weight of $0.4$.
In addition to data pre-processing, we use cutout augmentations \cite{devries2017improved} with a length of $16$ and a drop-path regularization \cite{larsson2016fractalnet} with a probability of $0.2$. 
WE also use the AutoAugment augmentation scheme. \\
{\bf ImageNet.}
Our training architecture starts with stem cells that reduce the input image resolution from $224$ to $56$ ($3$ reductions), similar to MobileNet-V1 \cite{mobilenet-v1}. We then stack $14$ cells: $12$ normal cells and $2$ reduction cells. The reduction cells are placed after the fourth and eighth normal cells. The normal cells start with $50$ channels, as the number of channels is doubled after each reduction cell. We also added SE layer\cite{hu2018squeeze} at the end of each cell. In total, the network contains $5.7$ million parameters.
We train the network on $2$ GPUs for $250$ epochs with a batch size of $256$. 
We use the SGD nesterov-momentum optimizer \cite{robbins1951stochastic} with a momentum of $0.9$, following a cosine learning rate with an initial learning rate value of $0.2$.
We apply a weight decay of $1\cdot10^{-4}$ and a norm gradient clipping at $5$. We add an auxiliary loss after the last reduction cell with the weight of $0.4$ and a label smoothing \cite{reed2014training} of $0.1$.
During training, we normalize the input image and crop it with a random cropping factor in the range of $0.08$ to $1$. In addition, autoaugment augmentations and randomly horizontal flipping are applied. During testing, we resize the input image to the size of $256$x$256$ and applying a fixed central crop to the size of $224$x$224$. \\
{\bf Additional datasets.}
Our additional classification datasets consist of the Following:

{\bf {CINIC-10:} \cite{cinic10}}
 is an extension of CIFAR-$10$ by ImageNet images, down-sampled to match the image size of CIFAR-$10$. 
 It has $270,000$ images of $10$ classes, i.e. it has larger train and test sets than those of CIFAR-$10$. 

{\bf {CIFAR-100:} \cite{cifar100}}
A natural image classification dataset, containing $100$ classes with $600$ images per class. The image size is $32$x$32$ and the train-test split is $50,000$:$10,000$ images respectively. 

{\bf {FREIBURG:} \cite{Freiburg}}
A groceries classification dataset consisting of $5000$ images of size $256$x$256$, divided into $25$ categories.
It has imbalanced class sizes ranging from $97$ to $370$ images per class. Images were taken in various aspect ratios and padded to squares.

{\bf {SVHN:} \cite{SVHN}}
A dataset containing real-world images of digits and numbers in natural scenes. It consists of $600,000$ images of size $32$x$32$, divided into $10$ classes. The dataset can be thought of as a real-world alternative to MNIST, with an order of magnitude more images and significantly harder real-world scenarios. 

{\bf {FMNIST:} \cite{fashionMnist}}
A clothes classification dataset with a $60,000$:$10,000$ train-test split. Each example is a grayscale image of size $28$x$28$, associated with a label from $10$ classes of clothes. It is intended to serve as a direct drop-in replacement for the original MNIST dataset as a benchmark for machine learning algorithms.

The training scheme use for those was similar to the one used for CIFAR-10, with some minor adjustments and modifications - mainly the use  of standard color augmentations instead of autoaugment regimes, and default training length of $600$ epochs instead of $1500$ epochs.
For the FREIBURG dataset, we resized the original images from $256$x$256$ to $64$x$64$. For CINIC-10, we were training for $400$ epochs instead of $600$, since this dataset is quite large.
Note that for each of those datasets, all of the cells were trained with exactly the same network architecture and hyper-parameters, unlike our ImageNet comparison at \ref{imagenet_paragraph}, where each cell was embedded into a different architecture and trained with a completely different scheme.

\end{document}